\def\eqref#1{equation~\ref{#1}}
\def\1{\bm{1}}
\DeclareMathAlphabet{\mathsfit}{\encodingdefault}{\sfdefault}{m}{sl}
\SetMathAlphabet{\mathsfit}{bold}{\encodingdefault}{\sfdefault}{bx}{n}
\newtheorem*{lemma}{Lemma}
\newcommand{\tell}{T^{\ell}}
\newcommand{\telli}{T^{\ell}_i} 
\newcommand{\thetaelli}{\theta^{\ell}_i}
\title{Invariant Representations with Stochastically Quantized Neural Networks}
\author {
    Mattia Cerrato \textsuperscript{\rm 1}\equalcontrib ,
    Marius Köppel \textsuperscript{\rm 2}\equalcontrib,
    Roberto Esposito \textsuperscript{\rm 3},
    Stefan Kramer \textsuperscript{\rm 1}
}
\begin{document}

\maketitle

\begin{abstract}

Representation learning algorithms offer the opportunity to learn invariant representations of the input data with regard to nuisance factors.
Many authors have leveraged such strategies to learn fair representations, i.e., vectors where information about sensitive attributes is removed. These methods are attractive as they may be interpreted as minimizing the mutual information between a neural layer's activations and a sensitive attribute.
However, the theoretical grounding of such methods relies either on the computation of infinitely accurate adversaries or on minimizing a variational upper bound of a mutual information estimate.
In this paper, we propose a methodology for direct computation of the mutual information between neurons in a layer and a sensitive attribute. We employ stochastically-activated binary neural networks, which lets us treat neurons as random variables.
Our method is therefore able to minimize an upper bound on the mutual information between the neural representations and a sensitive attribute.
We show that this method compares favorably with the state of the art in fair representation learning and that the learned representations display a higher level of invariance compared to full-precision neural networks.

\end{abstract}

\section{Introduction}\label{sec:intro}

Representation learning algorithms based on neural networks are being employed extensively in information retrieval and data mining applications.
The social impact of what the general public refers to as ``AI'' is now a topic of much discussion, with regulators in the EU even putting forward legal proposals which would require practitioners to ``[...] minimize the risk of unfair biases embedded in the model [...]'' \cite{aiproposal}. 
Such proposals refer to biases with regard to individual characteristics which are protected by the law, such as gender and ethnicity.
The concern is that models trained on biased data might then learn those biases \cite{barocas-hardt-narayanan}, therefore perpetuating historical discrimination against certain groups of individuals. 
Machine learning methodologies designed to avoid these situations are often said to be ``group-fair''.


\begin{figure*}
    \centering
    \includegraphics[width=0.72\linewidth]{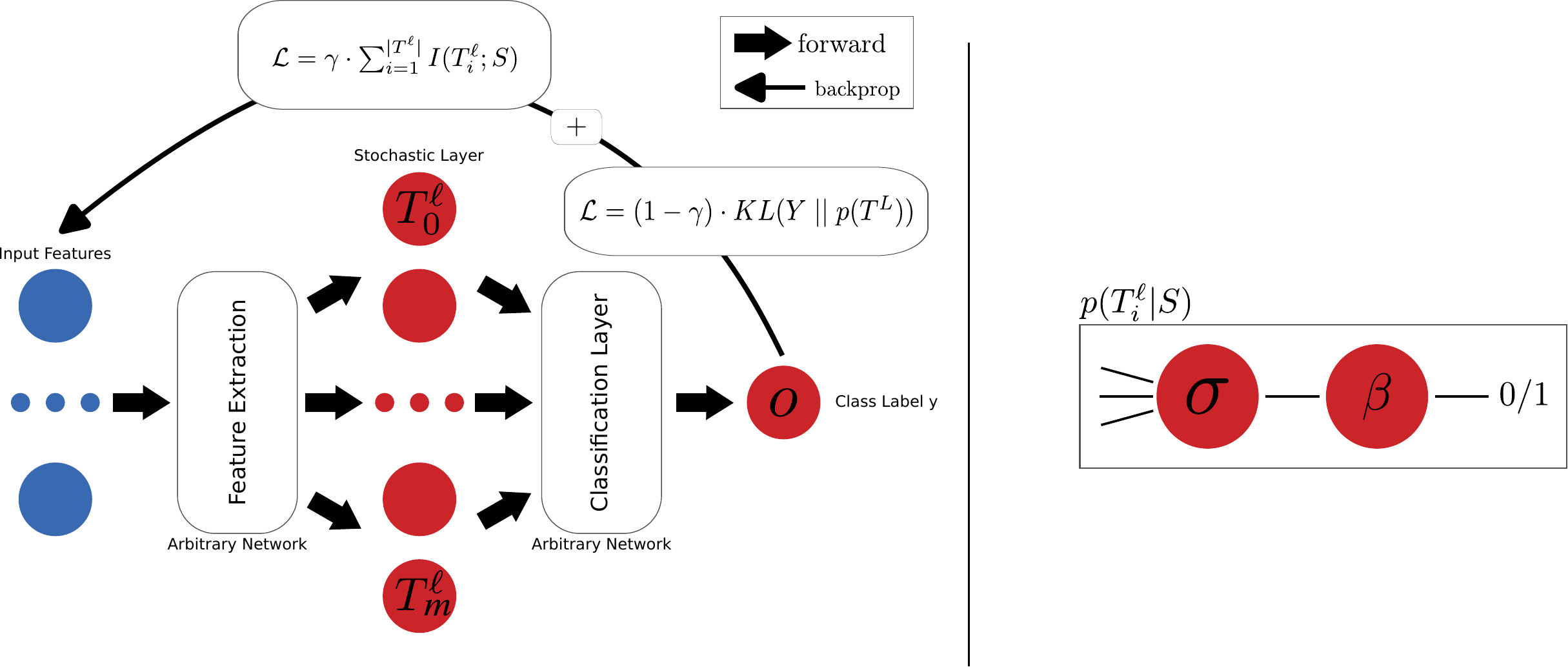}
    \caption{\textbf{Left}: sketch of a Stochastically Quantized Neural Network. The stochastic layer $\tell$, holding the quantized neurons, is shown with $T_0^{\ell} \dots T_m^{\ell}$. During the forward step the input features are extracted via feature extraction layers before they enter the stochastic layer. After the stochastic layer further classification layers are used to classify the class label $y$. During the backpropagation the loss function is evaluated e.g. via binary cross entropy for the class label $y$ and by calculating the mutual information of the stochastic layer $T^{\ell}$ and the sensitive attribute $S$. The influence of the two loss functions is controlled via the parameter $\gamma$. The feature extraction and classification layers may be chosen to be full-precision. \textbf{Right}: sketch of a stochastically quantized neuron. The neuron is sigmoid-activated, but the sigmoid output is employed as the parameter for a Bernoulli distribution, which we then sample from. This enables the interpretation of $\telli$ as a random variable and $\tell$ as a random vector, in turn allowing us to compute $I(\tell;S)$.}
    \label{fig:model}
\end{figure*}

One possible approach to the group fairness problem is fair representation learning.  
Fair representation learning is a set of techniques which learns new representations of the original data where information about protected characteristics such as gender or ethnicity has been removed.
Numerous authors have employed neural networks as the base learning algorithm in this task \cite{zemel2013learning,xie2017controllable,madras}.
The core concept in fair representation learning is to remove sensitive information from a neural network's representations.
If one is able to guarantee that the network has no information about, for instance, an individual's gender identity, it follows that the decisions undertaken by the model are independent of it.
One possible formalization for the above desideratum relies on computing the mutual information between the $\ell$-th neural layer $\tell$ and a sensitive attribute $S$ representing the protected information. 
If the mutual information $I(\tell; S)$ equals $0$, for instance, one would obtain group-invariant representations and therefore unbiased decisions which do not rely on $S$.
Estimating mutual information over high-dimensional distributions is however highly challenging in general.
In practice, an upper bound on $I(\tell;S)$ is sufficient for minimization purposes.
Previous research has employed post-training quantization \cite{tishby2015deep}, variational approximations of the encoding distribution \cite{moyer2018invariant,belghazi2018mutual} and adversarial learning \cite{madras,xie2017controllable}.
Also, the mutual information $I(\tell;S)$ is ill-defined in neural networks as they are deterministic functions of the input data. 
Implying that $I(X;\tell)$ is vacuous, it does not depend on the network parameters \cite{goldfeld2019estimating}.

In this work, we provide an alternative approach to fair representation learning by employing stochastically-quantized neural networks.
These networks have a low-precision activation (1 bit) which may be interpreted as a random variable following the Bernoulli distribution.
We show a sketch of these networks in Figure~\ref{fig:model}.
Thus, we are able to obtain a finite, exact value for the mutual information $I(\tell_i;S)$ between a neuron $i$ in any layer $\ell$ and the sensitive attribute, without relying on variational approximation \cite{moyer2018invariant}, adversarial learning \cite{madras} or adding noise to the representations \cite{goldfeld2019estimating}.
We then employ this exact value to upper bound the overall mutual information in layer $\ell$, $I(\tell; S)$, obtaining an optimization objective which leads to invariant representations.
Furthermore, we discuss how to compute $I(\tell; S)$ in stochastically activated binary neural networks via density estimation.
Our experimentation proves that quantized, low-precision models are able to obtain group-invariant representations in a fair classification setting and avoid training set biases in image data. 
Our contributions can be summarized as follows: I. We show how to compute the mutual information $I(\tell_i; S)$ between a stochastic binary neuron and the sensitive attribute $S$. II. We show how to use this value to upper bound the mutual information for the whole layer $I(\tell; S)$, which is a natural objective for invariant representation learning. III. We employ density estimation to compute the mutual information $I(\tell; S)$. IV. We perform experiments on three fair classification datasets and one invariant representation learning dataset, showing that our low-precision model is competitive with the state of the art.

\section{Related Work}\label{sec:related}

Algorithmic fairness has attracted significant attention from the academic community and general public in recent years, thanks in no small part to the ProPublica/COMPAS debate \cite{machine_bias,Rudin2020Age}.
To the best of our knowledge, however, the first contribution in this area dates back to 1996, when Friedman and Nisselbaum \cite{friedman1996bias} contributed that automatic decision systems need to be developed with particular attention to systemic discrimination and moral reasoning. The need to tackle automatic discrimination is also part of EU-level law in the GDPR, Recital 71 in particular \cite{malgieri2020gdpr}.

One possible way to tackle these issues is to remove the information about the ``nuisance factor'' $S$ from the data $X$ by employing fair representation learning.
Fair representation learning techniques learn a projection of $X$ into a latent feature space $Z$ where all information about $S$ has been removed. One seminal contribution in this area is due to Zemel et al. \cite{zemel2013learning}. Since then, neural networks have been extensively used in this space. Some proposals \cite{xie2017controllable, madras} employ adversarial learning, a technique due to Ganin et al. \cite{ganin2016jmlr} in which two networks are pitted against one another in predicting $Y$ and removing information about $S$. Another line of work \cite{Louizos2016TheVF, moyer2018invariant} employs variational inference to approximate the intractable distribution $p(Z \mid X)$. A combination of architecture design \cite{Louizos2016TheVF} and information-theoretic loss functions \cite{moyer2018invariant, gretton2012kernel} may then be employed to encourage invariance of the neural representations with regard to $S$. Our proposal differs in that we employ a stochastically quantized neural network for which it is possible to compute the mutual information between the $i$-th neuron at layer $\ell$ and the sensitive attribute $S$. This lets us avoid using approximations of the target distribution $p(\tell, S)$ and provides a more stable training objective for representation invariance compared to adversarial training.
More recently, neural architectures have been proposed for other fairness-related settings such as fair ranking \cite{zehlike2018reducing,cerrato2020pairwise,fair_pair_metric} and fair recourse \cite{shubham2021fair}.
Our experimental comparison will focus, however, on the fair/invariant classification setting so to enable a comparison with other methods in this area which bound or approximate information measures in neural representations.

\section{Method}

Our contribution deals with learning fair (group-invariant) representations in a principled way by employing stochastically quantized neural layers.
In this section, we provide a theoretical motivation for our work by contextualizing it in an information-theoretic framework similar to the one introduced in the ``Information Bottleneck'' literature \cite{tishby2015deep, goldfeld2019estimating}. 
As previously mentioned, the work done so far in this space has approximated measuring the information theoretic quantities in neural networks either via post-training quantization \cite{tishby2015deep}, adversarial bounding \cite{madras}, variational approximations \cite{moyer2018invariant} or the addition of stochastic noise in the representations \cite{goldfeld2019estimating,cerrato2020constraining}.
These approximations are necessary as the mutual information is an ill-defined concept in deterministic neural nets \cite{goldfeld2020information}.
Our approach is instead to employ stochastically quantized neural layers to exactly compute the mutual information between a sensitive attribute $S$ and any neuron in the layer.

\subsection{Invariant Representations and Mutual Information}\label{sec:estimation}
A feedforward neural network with $L$ layers may be formalized as a sequence of ``layer functions'' $\phi^{\ell}$ which compute the neural activations given an input $x \in \mathbb{R}^{d_0}$:
\begin{align}
    \phi^{\ell}(x) &= \sigma(A^{\ell} \phi^{\ell-1}(x) + b^{\ell}), & \ell &= 1, \dots, L \label{eq:phi-comp} \\
    \phi^{0}(x) &= x,
\end{align}
where $A_{\ell} \in \mathbb{R}^{d_{\ell} \times d_{\ell-1}}$ is a weight matrix, $b \in \mathbb{R}^{d_{\ell}}$ is a bias vector, $\sigma$ is an activation function, and $d_{\ell}$ is the size of the $\ell$-th layer.
We now define neural representations as applications of $\phi$ to the random variable $X$ which follows the empirical distribution of the $x$ samples:
\begin{align*}
    T^{\ell} &:= \phi^{\ell}(X), & \ell = 1, \dots, L
\end{align*}
We note that, in a supervised learning setting, the last layer $T^L$ is a reproduction of $Y$. 
Representation invariance may be formalized as an information-theoretic objective in which the representations of the $\ell$-th neural layer $\tell$ display minimal mutual information with regard to a sensitive attribute $S$:

\begin{equation}\label{eq:min_mi}
    I(T^{\ell}; S) \leq \alpha,
\end{equation}
with $\alpha \approx 0$ being a threshold where the information can be considered minimal, and $I(\tell; S)$ is the mutual information between $\tell$ and $S$.
Mutual information is an attractive measure in invariant representation learning, as two random variables are statistically independent if and only if $I(A;B) = 0$.
Thus, one may obtain $S$-invariant representations by minimizing the mutual information between $S$ and $\tell$, therefore certifying that no information about the sensitive attribute is contained in the representations.
Minimizing this objective might, however, remove all information about the original data $X$ and, possibly, the labels $Y$.
To avoid this issue, one might want to guarantee that some information about $X$ is preserved. 
This changes the setting to a problem which closely resembles the Information Bottleneck problem \cite{tishby2015deep} in which the task is to minimize $I(T^{\ell}; S)$ s.t. $I(T^{\ell}; X) \geq \beta$,
where $\beta$ is a positive real number.
However, constrained optimization is highly challenging in neural networks: in practice, previous work in this area has focused on, e.g., minimizing a reconstruction loss as a surrogate for the $I(\tell; X) \geq \beta$ constraint \cite{madras}.

At the same time, computing $I(\tell; S)$ is very challenging.
Since the distribution of $\tell$ is non-parametric and in general unknown, one would need to resort to density estimation to approximate it from data samples.
However, the number of needed samples scales exponentially with the dimensionality of $\tell$ \cite{paninski2003estimation}.
Furthermore, mutual information is in general ill-defined in neural networks: if $X$ is a random variable representing the empirical data distribution and $f$ is a deterministic injective function, the mutual information $I(X;f(X))$ is either constant (when $X$ is discrete) or infinite (when $X$ is continuous) \cite{goldfeld2019estimating}.
As activation functions in neural networks such as \texttt{sigmoid} and \texttt{tanh} are injective and real-valued, $I(\tell; S)$ is infinite. When the \texttt{ReLU} activation function is employed, the mutual information is finite but still vacuous -- i.e. independent of the network's parameters\footnote{Proper contextualization of this result requires some additional preliminary results, which we will avoid reporting here due to space constraints. We refer the interested reader to Goldfeld and Polyanskiy \cite{goldfeld2020information}.}.

Previous work in fair representation learning has circumvented this issue by employing different techniques.
One possible approach is to perform density estimation by grouping the real-valued activations into a finite number of bins \cite{tishby2015deep}.
This approach, however, returns different values for the mutual information depending on the number of bins \cite{tishby2015deep} and obtains the actual, ``true'' mutual information value only as the number of bins approaches infinity \cite{goldfeld2020information}.  
Some authors have instead relied on variational approximations of the intractable distribution $p(\tell | X)$, which leads to a tractable upper bound \cite{moyer2018invariant}.
Lastly, it is possible to bound the mutual information term $I(\tell; S)$ with the loss of an adversary network which tries to predict $S$ from $\tell$ \cite{madras,ganin2016jmlr,xie2017controllable}. 
The adversary is, however, usually chosen to be a deterministic network, and thus this result suffers from the issues described in this section. 

Our approach is instead to employ stochastically-quantized neural layers, a technique used in binary networks \cite{binarynet}, enabling the interpretation of neural activations as (discrete) random variables.
This approach has multiple benefits: It does not rely on variational approximations or adversarial training; it lets us treat $\tell$ as a random variable, avoiding the infinite mutual information issue described above; lastly, it avoids adding noise to the representations as a way to obtain stochasticity \cite{goldfeld2019estimating}.

\subsection{Mutual Information Computation via Bernoulli activations}\label{sec:model}


Our methodology relies on stochastically quantized neural layers in which the activations are stochastic.
More specifically, we employ \emph{binary} neural layers in which either the weights or the activations have 1-bit precision.
In a binary network, $\telli$ may be computed either deterministically or stochastically from the activations of the previous layer $T^{\ell-1}$ and the learnable weights and biases $w$ connecting the two. For the sake of presentation, we report in the following the formula for deterministic computation of $\tell$:

\begin{equation*}
    T^{\ell}_{i} = \mathbb{I} \left[ \frac{1}{\mid T^{\ell-1} \mid}\sum_{i=1}^{\mid T^{\ell-1} \mid} w_i T^{\ell - 1}_{i} \geq 0.5 \right],
\end{equation*}
where $\mathbb{I}$ is the indicator function.
In this paper, however, we employ stochastic quantization of binary neurons, which we compute by sampling from $\mathcal{B}(\theta^l_i)$, where:
\begin{align}\label{eq:theta}
    \theta^{\ell}_{i} &= \sigma\left(\frac{1}{\mid T^{\ell-1} \mid}\sum_{i=1}^{\mid T^{\ell-1} \mid} w_i T^{\ell - 1}_{i}\right),
\end{align}
where $\sigma$ is the sigmoid function. 
We then sample from this distribution to compute the actual activation $\telli$.
Thus, $\telli$ may be interpreted as a random variable following the Bernoulli distribution: $T^{\ell}_i \sim \mathcal{B}(\theta^{\ell}_{i})$. The entropy of a random variable following the Bernoulli distribution has a closed, analytical form:
\begin{equation}
    \label{eq:entropy_bernoulli}
    H(\telli) = -(1-\thetaelli) \cdot \log_2(1-\thetaelli) - \thetaelli \cdot \log_2(\thetaelli).
\end{equation}
Recalling that the definition of mutual information may be rewritten in terms of entropy as $I(\telli;S) = H(\telli) - H(\telli \mid S)$, we then note that it is possible to compute it exactly for a sensitive attribute $S$ and a neuron $\telli$. The conditional entropy $H(\tell_i \mid S=s)$ may be computed by selecting those representations $\tell_i = \phi(x)$ for which it is true that $S=s$.

\noindent We then consider the whole layer as a stochastic random vector $\tell = [\tell_1, \tell_2, ... ,\tell_m]$.
We show in the following lemma that $\sum_{i=1}^{\mid \tell \mid} I(\telli;S) \geq I(\tell;S)$. 
Thus, minimizing $\sum_{i=1}^{\mid \tell \mid} I(\telli ; S)$ will minimize $I(\tell ; S)$.

\begin{lemma}
Let $\tell$ be a random vector of a given layer in the network, $S$ a random variable representing the sensitive attribute, $|\tell|$ the number of neurons in $\tell$ and $TC(\tell;S)$ the informativeness~\cite{informativeness}. Then, the mutual information $I(\tell; S)$ is minimized if $\sum^{|\tell|}_{i=1}I(\telli; S)$ is minimized.
\end{lemma}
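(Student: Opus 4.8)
The plan is to reduce the claimed bound $\sum_{i=1}^{|\tell|} I(\telli; S) \ge I(\tell; S)$ to a single statement about the informativeness $TC(\tell; S)$. First I would record the identity
\[
\sum_{i=1}^{|\tell|} I(\telli; S) = I(\tell; S) + TC(\tell; S),
\]
obtained purely by entropy bookkeeping: writing each $I(\telli;S) = H(\telli) - H(\telli \mid S)$ and $I(\tell;S) = H(\tell) - H(\tell \mid S)$, the difference $\sum_i I(\telli;S) - I(\tell;S)$ collapses to $\big(\sum_i H(\telli) - H(\tell)\big) - \big(\sum_i H(\telli\mid S) - H(\tell \mid S)\big)$, which is exactly the total correlation minus the conditional total correlation, i.e. the informativeness. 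Given this identity, the lemma is equivalent to $TC(\tell;S) \ge 0$: once that holds, $\sum_i I(\telli;S)$ is an upper bound on $I(\tell;S)$, and since both quantities are non-negative, driving the tractable per-neuron sum toward $0$ forces the intractable $I(\tell;S)$ to $0$ as well, which is the sense in which minimizing the sum minimizes $I(\tell;S)$.

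Next I would dispatch the two routine constituents of the identity. The total correlation $TC(\tell) = \sum_i H(\telli) - H(\tell) \ge 0$ is immediate from subadditivity of Shannon entropy, equivalently from non-negativity of $\KL\big(p(\tell)\,\|\,\prod_i p(\telli)\big)$; the conditional version $TC(\tell \mid S) = \sum_i H(\telli \mid S) - H(\tell \mid S) \ge 0$ follows by applying the same subadditivity inside each event $S=s$ and averaging over $S$. Both can be stated without any real computation.

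The hard part will be the sign of $TC(\tell;S) = TC(\tell) - TC(\tell \mid S)$, since this difference is \emph{not} non-negative for arbitrary joint distributions: conditioning on $S$ can manufacture dependencies among neurons that were absent marginally (the synergy phenomenon --- if the $\telli$ are marginally independent yet jointly determine $S$, then $TC(\tell)=0$ while $TC(\tell\mid S)>0$ and the bound reverses). Hence the inequality cannot hold unconditionally, and the decisive step is to invoke the structure of the stochastically quantized layer that excludes this case. The natural hypothesis is conditional independence of the neurons given $S$, under which $TC(\tell \mid S)=0$ and therefore $TC(\tell;S) = TC(\tell) \ge 0$; I would either adopt this assumption explicitly or attempt to derive it from the sampling mechanism, in which each $\telli \sim \mathcal{B}(\thetaelli)$ is drawn independently. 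I expect justifying this conditional-independence step to be the main obstacle, because independence of the $\telli$ is granted given the previous layer's activations but need not survive the averaging over inputs implied by conditioning on $S$ alone; pinning down the precise hypotheses that make $TC(\tell;S)\ge 0$ rigorous is what the argument really turns on.
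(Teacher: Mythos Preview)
Your entropy bookkeeping leading to $\sum_i I(\telli;S) = I(\tell;S) + TC(\tell;S)$ is exactly the paper's argument: the authors expand $TC(\tell;S) = TC(\tell) - TC(\tell\mid S)$ into entropy terms, regroup into $\sum_i I(\telli;S) - I(\tell;S)$, and then conclude the bound from $TC(\tell;S)\ge 0$ together with non-negativity of the mutual-information terms. So up to and including the identity, you match the paper step for step.

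Where you diverge is in the scrutiny you apply to the sign of $TC(\tell;S)$. The paper does \emph{not} invoke conditional independence of the $\telli$ given $S$, nor any other structural property of the stochastic layer; it simply asserts $TC(\tell;S)\ge 0$ (with a citation to Gao et~al.) and proceeds. Your synergy counterexample---marginally independent $\telli$ that jointly determine $S$, e.g.\ $S=\tell_1\oplus\tell_2$ with fair independent bits---is correct and shows that $TC(\tell;S)$ can be strictly negative, in which case $\sum_i I(\telli;S) < I(\tell;S)$ and the claimed upper bound fails. The step you identify as the main obstacle is precisely the step the paper treats as immediate. In other words, you have not introduced a gap relative to the paper; you have exposed one that the paper leaves unaddressed. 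If you want to close it, the conditional-independence route you sketch (forcing $TC(\tell\mid S)=0$) is the natural hypothesis, but as you note it is not delivered by independent Bernoulli sampling \emph{given the previous layer} once one marginalizes over $X$ and conditions only on $S$; the paper offers no help here.
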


\begin{proof}
We recall the definition of total correlation \cite{watanabe} for a random vector $A = (a_1, \dots, a_n)$ having the probability density function $p(A)$:
\begin{align}\label{eq:tc-definition}   
    TC(A) &= \sum_{i=1}^n H(a_i) - H(A) \\
    &= KL[p(A) \mid \mid p(a_1) \, p(a_2) \dots \, p(a_n)] \nonumber
\end{align}
We then define the informativeness of the sensitive attribute and a layer $\tell$ in the network as:
\begin{equation}\label{informativeness}
    TC(\tell;S) = TC(\tell) - TC(\tell | S),
\end{equation}
where $TC(\tell)$ is the total correlation~\cite{totalcorrelation1} or multi-information~\cite{totalcorrelation2} of $\tell$ and $TC(\tell|S)$ the conditional total correlation of $\tell$ given $S$.
One can rewrite the total correlation for $\tell$ and $S$ as entropy terms following Equation~\ref{eq:tc-definition}. Similarly, the conditional total correlation may be reformulated as:
\begin{equation}\label{eq:conditionaltotalcorrelation}
    TC(\tell|S) = \sum^{|\tell|}_{i=1}H(\telli|S) - H(\tell|S).
\end{equation}
Following the derivation of Gao et al.~\cite{informativeness} Equation~\ref{informativeness} can be rewritten, using Equation~\ref{eq:tc-definition} and Equation~\ref{eq:conditionaltotalcorrelation}, to:
\begin{align*}
    TC(\tell;S) &= \sum^{|\tell|}_{i=1}H(\telli) - H(\tell) \\
                & - \sum^{|\tell|}_{i=1}H(\telli|S) + H(\tell|S) \\
                &= \sum^{|\tell|}_{i=1}I(\telli; S) - \underbrace{(H(\tell) - H(\tell|S))}_{I(\tell; S)}.
\end{align*}
Since $TC(\tell;S)\geq 0$, $I(\telli; S)\geq 0$ and $I(\tell; S)\geq 0$ it follows that:
\begin{equation}\label{eq:final-relation}
    \sum^{|\tell|}_{i=1}I(\telli; S) \geq I(\tell; S).
\end{equation}
\end{proof}

Thus, given a stochastically quantized binary neural layer, we can impose representation invariance with respect to $S$ by stochastic gradient descent over a loss function $\mathcal{L}$ that directly incorporates $I(\telli;S)$:
\begin{align}
    \mathcal{L} = \gamma \cdot \sum_{i=1}^{\mid \tell \mid} I(\telli; S) + (1-\gamma) \cdot KL(Y \mid \mid p(T^{L})),
\end{align}\label{eg:loss-function}
\noindent where $KL$ is the Kullback-Leibler divergence and $\gamma$ is a trade-off parameter weighting the importance of representation invariance and accuracy on $Y$.
It is worthwhile to mention that only a single layer $\ell$ needs to be stochastically quantized for the mutual information term to be computable.
We explored the performance of both a ``hybrid'' network with a mix of binary-precision and full-precision layers and a fully binary-precision network (see Table~\ref{tab:best-hp} in the supplementary material).
Nevertheless, we found in our experiments that the ``hybrid'' method has the strongest performers in all cases.


We also note that stochastically quantized binary neural networks are a natural fit for density estimation techniques.
As the network is stochastic and quantized by default, it is possible to estimate the conditional distribution $p(\tell \mid S)$ from samples while avoiding the issues with post-training quantization and ill-definedness described earlier in this section and in the literature \cite{goldfeld2019estimating,goldfeld2020information}.
We do not give specifics about this method in this section, as it relies on standard counting techniques.
However, we give a full description of this strategy in the supplementary material.
We test this method in the next section, where we dubbed it \texttt{BinaryMI} to contrast it with the methodology described previously in this section, which we will refer to as \texttt{BinaryBernoulli}.
\section{Experiments}\label{sec:experiments}
Our experimental setting focuses on analyzing the performance of the methodology presented in this paper in fair classification and invariant representation learning settings. 
Our experimentation aims to answer the following questions:

\textbf{Q1.} Is the present methodology able to learn fair models which avoid discrimination?
\textbf{A1.} Yes. We analyze our models' accuracy and fairness by measuring the area under curve (AUC) and their disparate impact / disparate mistreatment. We compare them with full-precision neural networks trained for fair classification and see that our approach is able to find strong accuracy/fairness tradeoffs under the assumption that both are equally important. We also observe that supervised classifiers trained with the learned representations and the sensitive data are unable to generalize to the test set, as their accuracy is close to random guessing performance. Furthermore, we show that our proposed models are able to remove nuisance factors from image datasets.

\textbf{Q2.} Is the present methodology \emph{stable}, i.e. is it able to learn different tradeoffs between accuracy and fairness?
\textbf{A2.} Yes. Our method is able to explore the fairness-accuracy tradeoff with a higher degree of stability than previously proposed neural models when changing the tradeoff parameter $\gamma$. We observe high positive correlation between higher $\gamma$ and higher fairness for both \texttt{BinaryBernoulli} and \texttt{BinaryMI}. 


\begin{figure*}
    \includegraphics[width=\linewidth]{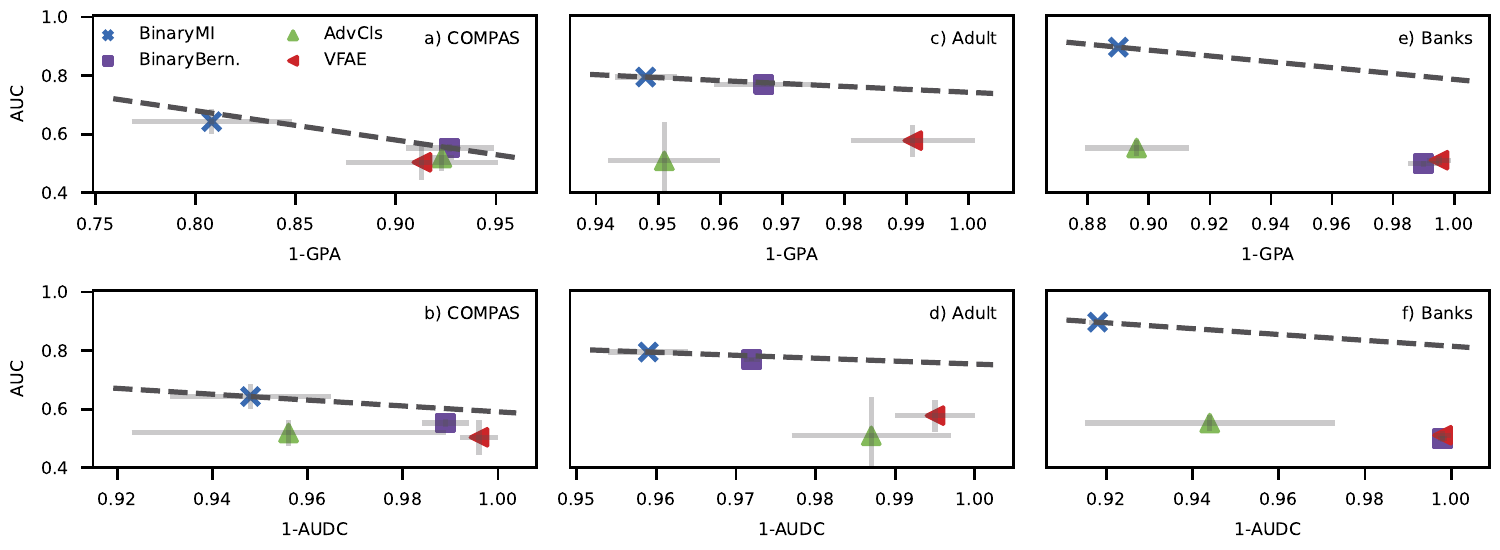}
    \caption{Experiment results for our Bernoulli entropy model (\texttt{BinaryBernoulli}), our density estimation model (\texttt{BinaryMI}), an adversarial classifier (\texttt{AdvCls}), and a fair variational autoencoder (\texttt{VFAE}). The dotted line represents the line of equivalent fairness/accuracy tradeoffs and goes through the model closest (under the $L_1$ norm) to perfect accuracy and perfect fairness. We show on top our best performing models for the AUC/1-GPA tradeoff, and on the bottom the best performing models for AUC/1-AUDC. \label{fig:results}}
\end{figure*}

\subsection{Datasets}\label{sec:datasets}

\textbf{COMPAS.} This dataset was released by ProPublica \cite{machine_bias}.
The ground truth is whether an individual committed a crime in the following two years. 
The sensitive attribute is the individual's ethnicity.
Machine learning models trained on this dataset may display disparate mistreatment \cite{zafar2017fairness}, thus our objective is minimizing an equal opportunity metric, the Group-dependent Pairwise Accuracy, while maximizing accuracy (AUC).\\
\textbf{Adult.} This dataset is part of the UCI repository \cite{dua2019uci}. 
The ground truth is whether an individual's annual salary is over 50K\$ per year or not \cite{adult}.
This dataset has been shown to be biased against gender \cite{Louizos2016TheVF,zemel2013learning}.\\
\textbf{Bank marketing}. In this dataset, the classification goal is whether an invidivual will subscribe a term deposit.
Models trained on this dataset may display both disparate impact and disparate mistreatment with regard to age, more specifically on whether individuals are under 25 and over 65 years of age.\\
\textbf{Biased-MNIST}. This is an image dataset based on the well-known MNIST Handwritten Digits database in which the background has been modified so to display a color bias \cite{bahng2020learning}.
More specifically, a nuisance factor $C$ is introduced which is highly correlated with the ground truth $Y$ and whose values represent the background color in the training set. 
Ten different colors are pre-selected for each value of $Y = \{0 ... 9\}$ and inserted as a background in the training images with high probability ($p = 0.99, 0.995$). 
The test images, on the other hand, have background color chosen at random.
We show samples from the training and test data in Figure~\ref{fig:mnist}, which may be found in the supplementary material.
Thus, the background color/nuisance factor provides a very strong training bias. The simplest strategy for a model to achieve high accuracy on the training set is to overfit the background color. Therefore, models that are unable to learn invariant representations and decisions will inevitably overfit the training set \cite{bahng2020learning}.

\subsection{Metrics}\label{sec:metrics}

\paragraph*{Group-dependent Pairwise Accuracy.}\label{sec:gpa}\ We employ this to test the disparate mistreatment of our models. Its introduction is due to Narashiman et al. \cite{fair_pair_metric}. We report its definition in the following.  

Let $G_1, ..., G_K$ be a set of $K$ groups such that every instance inside the dataset $\mathfrak D$ belongs to one of these groups. The \emph{group-dependent pairwise accuracy} $A_{G_i > G_j}$ is then defined as the accuracy of a classifier on instances which are labeled as positives and belong to group $G_i$ and instances labeled as negatives which belong to group $G_j$. Group-dependent pairwise accuracy is then defined as $|A_{G_i > G_j} - A_{G_j > G_i}|$, and should be close to zero.
The rationale for this metric is that the false positive and false negative rates should be equalized across groups if possible, similarly to the notions of equality of opportunity \cite{hardt2016equality} or disparate mistreatment \cite{zafar2017fairness}.
In the following, we call the Group-dependent Pairwise Accuracy {\em GPA}. 

\paragraph*{Area under Discrimination Curve (AUDC)}
We take the discrimination as a measure of disparate impact \cite{zemel2013learning}, which is given by:
\begin{equation*}
\text{yDiscrim} = \left | \frac{\sum^n_{n:s_n=1} \hat{y}_n}{\sum^n_{n:s_n=1} 1} - \frac{\sum^n_{n:s_n=0} \hat{y}_n}{\sum^n_{n:s_n=0} 1} \right |, 
\end{equation*}
where $n:s_n=1$ denotes that the $n$-th example has a value of $s$ equal to 1.
We then generalize this metric in a similar fashion to how accuracy may be generalized to obtain a classifier's area under the curve (AUC): We evaluate the measure above for different classification thresholds and then compute the area under this curve.
We employed $100$ equi\-spaced thresholds in our experiments.
In the following, we will refer to this measure as AUDC (area under the discrimination curve) as done elsewhere in the literature~\cite{interFair}.
Contrary to AUC, lower values are better. 

\subsection{Experimental setup}\label{sec:generalexp}
We split all datasets into 3 internal and 3 external folds.
On the 3 internal folds, we employ a Bayesian optimization technique to find the best hyperparameters for our model.
A summary of our models' best hyperparameters can be found in the supplementary material at Table \ref{tab:best-hp}.
As our interest is to obtain models which are both fair and accurate, we employ Bayesian optimization to maximize the sum of the models' AUC, 1-GPA and 1-AUDC. 
We set the maximum number of iterations to $200$.
The best hyperparameter setting found this way is then evaluated on the 3 external folds and reported. We relied on the Weights \& Biases platform for an implementation of Bayesian optimization and overall experiment tracking \cite{wandb}. 
On the fairness datasets, we compare with an adversarial classifier (AdvCls in the Figures) trained as described by Xie et al. \cite{xie2017controllable} and a variational fair autoencoder (VFAE) \cite{Louizos2016TheVF}. We obtained publicly available implementations for these models and optimized the hyperparameters with the same strategy we employed for our models. We report our model relying on the Bernoulli entropy computation as \verb|BinaryBernoulli| in the figures, while the model relying on density estimation to compute the joint $p(\tell)$ is referred to as \verb|BinaryMI|.

\subsection{Fair and Invariant Classification}
\label{sec:results}

We report plots analyzing the accuracy/fairness tradeoff of our models trained for classification in Figure~\ref{fig:results}. We take AUC as our measure of accuracy and both 1-GPA and 1-AUDC as the fairness metric. The ideal model in the fair classification setting displays maximal AUC and little to no GPA/AUDC and would appear at the very top right in Figure~\ref{fig:results}. This result is not attainable on the datasets we consider, as there is usually some correlation between $S$ and $Y$, which prevents us to obtain perfectly fair and accurate decisions. Thus, one needs to consider possible accuracy/fairness tradeoffs. We assume a balanced accuracy/fairness tradeoff and consider as the ``best'' model the one closest to $(1, 1)$ under the $L_1$ norm. We then show all equivalent tradeoff points as a dotted line. We see that on both COMPAS and Adult, our \verb|BinaryMI| model is able to find a stronger tradeoff than the competitors, with \verb|BinaryBernoulli| very close to an equivalent tradeoff. The same may be said for the third dataset, Banks, in which however our two models find very different tradeoffs, with \verb|BinaryBernoulli| preferring an almost perfectly fair result and \verb|BinaryMI| finding a very accurate model. Compared to adversarial learning and variational inference, our models either find the best tradeoff or lie closest to the best tradeoff line.
We also analyze the accuracy/fairness tradeoff of the representations learned by our models. We extract neural activations from all the networks considered at the penultimate layer $T^{\ell-1}$, which we stochastically quantized in all experiments. We then report in Figure 4 in the supplementary material the performance of a Random Forest algorithm with 1000 base estimators trained to predict the sensitive attribute $S$ associated with each representation. We observe that \texttt{BinaryMI} is able to find the best tradeoff between informativeness on $Y$ (bottom row)
and invariance to $S$. Our best \texttt{BinaryBernoulli} model representations, which are comparatively not very invariant on COMPAS, performed strongly in this regard on both Adult and Banks.
%

\begin{figure}
    \centering
    \includegraphics[width=.5\textwidth]{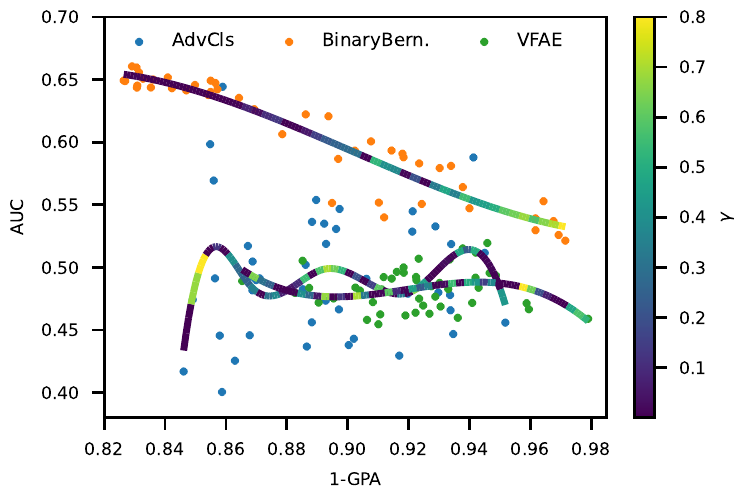}
    \caption{1-GPA vs. AUC plot for \texttt{BinaryBernoulli}, \texttt{AdvCls} and  \texttt{VFAE} while varying the $\gamma$ parameter. The values are regressed with smoothed lines using splines fitted on the result for each method separately. The color of the spline segments represents the value of $\gamma$ for the specific run. The data used for these experiments was the COMPAS dataset.}
    \label{fig:gamma_advcls}
\end{figure}

\subsection{Stability and Complexity Analysis}

It is important that fair classification models are able to find different tradeoffs between fairness and accuracy depending on the application requirements.
This tradeoff may be regulated, in practical terms, via a parameter $\gamma$ which weights the importance of accuracy and invariance in the loss function.
This idea is common, to the best of our knowledge, to all the fair representation learning algorithms developed so far.
We explore how the performance of our model changes with $\gamma$ in Figure~\ref{fig:gamma_advcls}, where we report the performance of a \texttt{BinaryBernoulli} model trained on the COMPAS dataset, an adversarial classifier and a fair variational autoencoder for comparison.
What we observe in Figure~\ref{fig:gamma_advcls} is that our model displays a relatively stable performance. As $\gamma$ grows, so does the fairness of the model in terms of 1-GPA (linear correlation $\rho=0.924$). The model is highly sensitive to values of $\gamma$ between $0.01$ and $0.5$. The same trend, but reversed, may be observed for AUC ($\rho = -0.836$).
We observed similar trends for \texttt{BinaryMI}, with correlations of $0.781$ and $-0.760$ for 1-GPA and AUC, respectively.
Thus, we reason that our proposal may be employed under different fairness requirements with minimal changes (a tweak of the $\gamma$ parameter). The adversarial classifier, on the other hand, displays little correlation between $\gamma$ and its performance.
While the best models are on par or almost on par with \texttt{BinaryBernoulli}, this happens for arbitrary values of $\gamma$. The adversarial model does not seem to be able to explore the fairness/accuracy tradeoff quite as well, and the effect of the $\gamma$ parameter is unpredictable. We posit that this behavior may be due to the difficulty of striking a balance between the predictive power of the two sub-networks which predict $Y$ and $S$ alternatively \cite{xie2017controllable}. This is a well-known issue for generative adversarial models which pit different networks against each other \cite{Chu2020Smoothness}.
We also note that the variational approximation-based model (VFAE) struggles to come to an accurate result, whereas it mostly takes fair decisions. 
The correlation between $\gamma$, which controls the strength of the Maximum Mean Discrepancy regularization in this model, and AUC/1-GPA is also quite low ($0.173$ and $-0.168$ respectively).

\subsection{Biased-MNIST} \label{sec:mnist}
\begin{table}
\centering
\caption{Accuracies for the biased MNIST experiments. Results for other methodologies as reported by Bahng et al. \cite{bahng2020learning}. We report results for two bias levels $\rho=0.99$ and $0.995$. A higher value of $\rho$ implies a higher chance of a biased sample in the training set.}
\resizebox{0.47\textwidth}{!}{
\begin{tabular}{cccccccc}
\toprule
 $\rho$ & Vanilla & ReBias & LearnedMixin & RUBi & BinaryMI & BinaryBernoulli \\
 \midrule
  0.995 & 72.1  & 76.0 & 78.2 & 90.4 & 89.08 & 90.64\\
  0.990 & 89.1  & 88.1 & 88.3 & 93.6 & 88.54 & 96.02 \\
\bottomrule
\end{tabular}
}
\label{tab:mnist_exp}
\end{table}

We report our method's performance on the Biased-MNIST dataset in Table~\ref{tab:mnist_exp}. We also report results from Bahng et al. \cite{bahng2020learning} as a comparison. To enable this comparison, we experimented with the same setup as the authors' by training our model for 80 epochs. As usual we selected our best hyperparameters with a Bayesian optimization strategy employing an Alexnet-like structure \cite{krizhevsky2012imagenet} by alternating (binary) convolutional layers and max-pooling layers.
We report the full information for the best hyperparameters we found in Table~\ref{tab:best-hp} in the supplementary material. 
We then tested with two different bias levels, i.e. the probability of a training sample displaying a specific color bias.
We observe that \texttt{BinaryBernoulli} is the strongest performer on both bias levels, even when compared with other full-precision strategies. \texttt{BinaryMI} has a comparable performance to a ``vanilla'' convolutional network when the probability of training bias is $\rho = 0.99$. However, it displays better scaling to the higher bias level than the baseline method. 
In this dataset, we see that our methodology is also a strong performer when removing biases from image data is necessary for classification accuracy.

\section{Conclusion and Future Work}

In this paper we proposed a methodology to compute the mutual information between a stochastically activated neuron and a sensitive attribute. 
We then generalized this methodology into two different strategies to compute the mutual information between a layer of neural activations and a sensitive attribute.
Both our strategies perform strongly on both fair classification datasets and invariant image classification.
Furthermore, our methodology displays high stability to changes of the accuracy/fairness tradeoff parameter $\gamma$, especially when compared to adversarial learning \cite{xie2017controllable, madras}. 
A possible direction for further development is to employ the methodologies discussed in this paper to revisit the debate on the information bottleneck problem introduced by Tishby et al. \cite{tishby2015deep}. 
As our models are partly stochastically quantized, they naturally lend themselves to mutual information computation, avoiding many of the common issues in estimating information measures in neural networks \cite{goldfeld2019estimating}.
While estimating the conditional probability $p(\tell \mid S)$ does seem to scale to very wide networks (i.e. layers with many neurons), it is possible to repeat the sampling (the stochastic quantization) as many times as needed, which could alleviate this issue. 
Furthermore, we would like to test the capabilities of the presented models in domain adaptation scenarios, where adversarial models are still used extensively \cite{ganin2016jmlr, madras}.

\newpage

\section{Acknowledgements}
We thank Alexander Segner for useful discussions throughout the development of this paper. The paper was partially supported by the ``TOPML: Trading Off Non-Functional Properties of Machine Learning'' project funded by the Carl-Zeiss-Stiftung in the Förderprogramm ``Durchbrüche'', identifying code P2021-02-014. This work has been partially supported by the European PILOT project and by the HPC4AI project \cite{hpc4ai}.

\clearpage
\section{Supplementary Material}\label{sec:supplementary}


\subsection{Mutual Information Computation via Density Estimation}\label{sec:joint}
In the following, we describe a strategy to compute the mutual information in a stochastically-quantized neural network via density estimation.
In general, given two joint continuous random variables $A$ and $B$ taking values over the sample space $\mathcal{A} \times \mathcal{B}$, the mutual information between them is defined as 

\begin{equation}\label{eq:mi}
    I(A;B) = \int_{\mathcal{B}} \int_{\mathcal{A}} p_{(A, B)}(a, b) \; log (\frac{p_{(A, B)}(a, b)}{p_A(a)p_B(b)}) d\mu_{(A, B)},
\end{equation}
where is the measure related to $p_{(A, B)}$. We note that the integrals in Equation~\ref{eq:mi} may be replaced by sums if $A$ and $B$ are discrete random variables. 
Mutual information may also be expressed in terms of entropy:
\begin{equation}\label{eq:condentro}
    I(A;B) = H(A) - H(A \mid B)
\end{equation}
We report here the general formula for entropy $H(X)$ when $X$ is a discrete variable:
\begin{equation}
    H(X) = \sum_{x \in X} -p(x) \; log \; p(x) \label{eq:entropy}
\end{equation}
Recalling the definition of Mutual Information in Equation \ref{eq:mi} and its relationship to entropy in Equation \ref{eq:condentro}, we see that an estimate of the joint $p(\tell) = p(\tell_1, \dots, \tell_m)$ and the joint conditional $p(\tell_1, \dots \tell_m \mid S)$ is needed to compute the mutual information $I(\tell;S)$ in a network. 
This is highly nontrivial in full-precision neural networks. In that situation, $\tell$ is not a random variable and each neuron $\telli$ may take any of $2^p$ values, where $p$ is the precision the activations are computed with. 
In our setup, however, each neuron $\telli$ may only take values in $\{0, 1\}$. Thus, we are able to estimate the joint distribution of the random vector $\tell = [\tell_1 ... \tell_m]$ from data via a simple density estimation scheme: plainly put, we count the occurrences of each possible activation vector $\tell$.

In general, density estimation may be performed, in its simplest form, via building a histogram for the data at hand $x=[x_1, \dots, x_n]$. Let us define $h$ as the histogram function, $B$ as a binning function and $\delta$ as the bin width. Formally, for $x \in \mathbb{R}$,  $h$  and $B$ can be defined as:

\begin{align*}
    h(x) &= \sum_{i=1}^{n} B(x - \hat{x}_i;\delta) \\
    B(x,\delta) &= 
    \begin{dcases}
    1 &x \in (-\delta/2, \delta/2) \\
    0 &\text{otherwise}
    \end{dcases}
\end{align*}
where $i$ iterates over the data samples and $\hat{x}_i$ is the center of the bin where $x_i$ lies.
Given the function above, one is able to estimate the probability density function for a data sample by computing $p(x) = \frac{1}{n\delta}h(x)$. Density estimation may also be computed with adaptive, data-dependant bin sizes or by using kernels to replace the binning function $B$. Parametric density estimation may also be employed when one has prior knowledge about the data following some parametric distribution, e.g. a Gaussian. These techniques have been extensively employed when estimating mutual information for high-dimensional distributions~\cite{kraskov2004estimating} such as neural activations. 
In our setup, however, we employ a stochastic quantization scheme which returns binary values. This lets us abstract away the choice of $\delta$, which is known to have a major influence in mutual information estimation in neural networks \cite{goldfeld2019estimating,goldfeld2020information}. 
Assuming for simplicity of presentation that $\tell = [\tell_1, \tell_2]$, we are able to estimate $p(\tell) = p(\tell_1, \tell_2)$ by counting the frequencies of all possible activation vectors, i.e. the realizations of the random vector $\tell \in \{[0,0],[0,1],[1,0],[1,1]\}$, or, equivalently, the output of the layer function $\phi^{\ell}(X)$. The general density computation for a layer of size $m$ follows:

\begin{align}
    B_i(\tell) &= 
    \begin{dcases}
        1 &\phi^{\ell}(X) = i \\ 
        0 &\text{otherwise}
    \end{dcases} \label{eq:abuse} \\ 
    h_i(\tell) &= \frac{1}{n}\sum_{j=1}^n B_i(\tell) \nonumber \\
    p(\tell) &= [h_1(\tell), \dots, h_m(\tell)] \nonumber
\end{align}

Where, with some abuse of notation, $\phi^{\ell}(X) = i$ indicates that a realization of the activation vector $\tell$ is equal to the $i$-th element of the sequence\footnote{Here, any ordering of the set is viable. Thus, we avoid specifying the sequence formally.} of binary vectors associated with the set $\{0, 1\}^m$. This procedure may also be employed to compute $p(\tell \mid S=s)$ by only selecting those data samples for which $S=s$ and adjusting the normalization factor $1/n$ accordingly. 
Therefore, we are able to compute the mutual information by estimating the underlying probabilities.

One limitation of the methodology presented in this subsection is that the number of samples required to estimate the joint distribution $p(\tell)$ scales exponentially with the number of neurons in the layer $m$. In practice, $m$ may be chosen as an hyperparameter and kept as low as needed. However, a very small sized layer placed at any point in the network will limit the network's expressivity. Another possible solution to this matter is to perform multiple forward passes so to obtain better estimates for $p(\tell)$ and $p(\tell \mid S)$, which we however did not investigate presently.

\begin{figure*}
    \includegraphics[width=\linewidth]{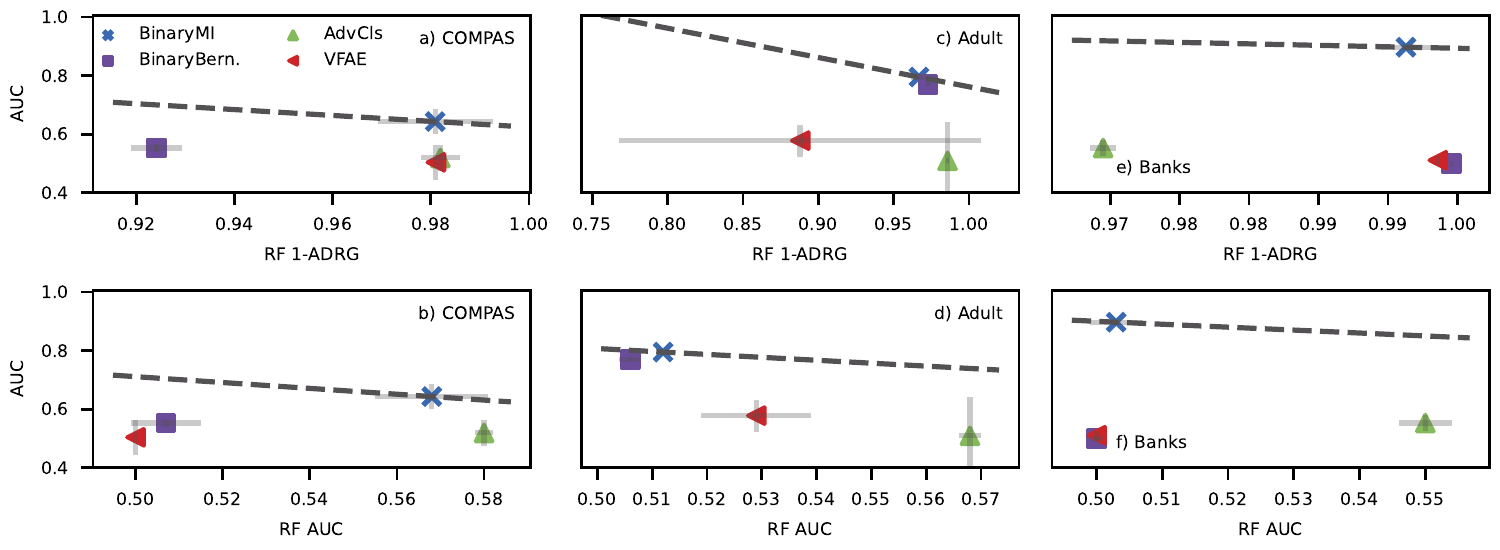}
    \caption{Representation results for our Bernoulli entropy model (\texttt{BinaryBernoulli}), our joint density estimation model (\texttt{BinaryMI}), an adversarial classifier (\texttt{AdvCls}) and a fair variational autoencoder (\texttt{VFAE}). The dotted line represent the line of equivalent fairness/accuracy tradeoffs and goes through the model closest (under the $L_1$ norm) to (1, 1). A Random Forest (RF) classifier is trained on the extracted representations $\tell$. In the top row, model AUC is compared with RF 1-ADRG, i.e. the absolute distance to random guess. This is computed by subtracting the RF accuracy to the majority class ratio in the dataset and taking the absolute value. In the bottom row, we compare the model AUC and the RF AUC. \label{fig:repr}}
\end{figure*}

\subsection{Hyperparameters}

We include our models' best hyperparameters in Table~\ref{tab:best-hp}.
Our data was split into 3 internal and 3 external folds for a total of 9 fits. 
On the 3 internal folds, we employ a Bayesian optimization technique to find the best hyperparameters for our model.
As our interest is to obtain models which are both fair and accurate, we employ Bayesian optimization to maximize the sum of the models' AUC, 1-GPA and 1-AUDC. 
We set the maximum number of iterations of the Bayesian optimization algorithm to $200$.
The best hyperparameter setting found this way is then evaluated on the 3 external folds and reported in Table~\ref{tab:best-hp}.
\begin{table}
\resizebox{0.49\textwidth}{!}{
    \begin{tabular}{llcc|}
    \cline{3-4}
                                                             & \multicolumn{1}{l|}{}       & \multicolumn{1}{c|}{\texttt{BinaryMI}} & \texttt{BinaryBernoulli} \\ \hline
    \multicolumn{1}{|l|}{\multirow{3}{*}{$\gamma$}}          & \multicolumn{1}{l|}{COMPAS} & \multicolumn{1}{c|}{$0.16$}   & $0.23$                                    \\ \cline{2-4} 
    \multicolumn{1}{|l|}{}                                   & \multicolumn{1}{l|}{Banks}  & \multicolumn{1}{c|}{$0.03$}   & $0.60$                                    \\ \cline{2-4} 
    \multicolumn{1}{|l|}{}                                   & \multicolumn{1}{l|}{Adult}  & \multicolumn{1}{c|}{$0.01$}   & $0.04$                                    \\ \hline
    \multicolumn{1}{|l|}{\multirow{3}{*}{Batch size}}        & \multicolumn{1}{l|}{COMPAS} & \multicolumn{1}{c|}{$242$}    & $175$                                     \\ \cline{2-4} 
    \multicolumn{1}{|l|}{}                                   & \multicolumn{1}{l|}{Banks}  & \multicolumn{1}{c|}{$153$}    & $240$                                     \\ \cline{2-4} 
    \multicolumn{1}{|l|}{}                                   & \multicolumn{1}{l|}{Adult}  & \multicolumn{1}{c|}{$228$}    & $225$                                     \\ \hline
    \multicolumn{1}{|l|}{\multirow{3}{*}{N. hidden layers}}  & \multicolumn{1}{l|}{COMPAS} & \multicolumn{1}{c|}{$2$}      & $3$                                       \\ \cline{2-4} 
    \multicolumn{1}{|l|}{}                                   & \multicolumn{1}{l|}{Banks}  & \multicolumn{1}{c|}{$5$}      & $1$                                       \\ \cline{2-4} 
    \multicolumn{1}{|l|}{}                                   & \multicolumn{1}{l|}{Adult}  & \multicolumn{1}{c|}{$4$}      & $3$                                       \\ \hline
    \multicolumn{1}{|l|}{\multirow{3}{*}{Hidden layer size}} & \multicolumn{1}{l|}{COMPAS} & \multicolumn{1}{c|}{$10$}     & $20$                                      \\ \cline{2-4} 
    \multicolumn{1}{|l|}{}                                   & \multicolumn{1}{l|}{Banks}  & \multicolumn{1}{c|}{$40$}     & $30$                                      \\ \cline{2-4} 
    \multicolumn{1}{|l|}{}                                   & \multicolumn{1}{l|}{Adult}  & \multicolumn{1}{c|}{$50$}     & $50$                                      \\ \hline
    \multicolumn{1}{|l|}{\multirow{3}{*}{Hybrid}}            & \multicolumn{1}{l|}{COMPAS} & \multicolumn{1}{c|}{yes}      & yes                                       \\ \cline{2-4} 
    \multicolumn{1}{|l|}{}                                   & \multicolumn{1}{l|}{Banks}  & \multicolumn{1}{c|}{yes}      & yes                                       \\ \cline{2-4} 
    \multicolumn{1}{|l|}{}                                   & \multicolumn{1}{l|}{Adult}  & \multicolumn{1}{c|}{yes}      & yes                                       \\ \hline
    \multicolumn{1}{|l|}{Learning Rate}                      & \multicolumn{3}{c|}{$0.0001$}                                                                           \\ \hline
    \multicolumn{1}{|l|}{Optimizer}                          & \multicolumn{3}{c|}{ADAM}                                                                               \\ \hline
    \multicolumn{1}{|l|}{Epochs}                             & \multicolumn{3}{c|}{100}                                                                                \\ \hline
    \end{tabular}
}
\caption{Best hyperparameter combinations for \texttt{BinaryMI} and \texttt{BinaryBernoulli}. Learning rate, optimizer and number of epochs were kept fixed for both models across datasets. \label{tab:best-hp}} 
\end{table}

\subsection{Fair and Invariant Representations}\label{sec:repr-results}

We include the results for our analysis of the accuracy/fairness tradeoff of \emph{the representations} learned by our models in Figure~\ref{fig:repr}. As mentioned in the paper, we extract neural activations from all the networks considered at the penultimate layer $T^{L-1}$. In Figure~\ref{fig:repr} the performance of a Random Forest algorithm with 1000 base estimators trained to predict the sensitive attribute $S$ associated with each representation is reported. We refer to the paper for a discussion of these results.

\subsection{Biased-MNIST Training Images}

We include an example of Biased-MNIST training and testing images, to clarify how a biased dataset may be constructed from MNIST images. 

\begin{figure}[h!]
\centering
\includegraphics[width=\linewidth]{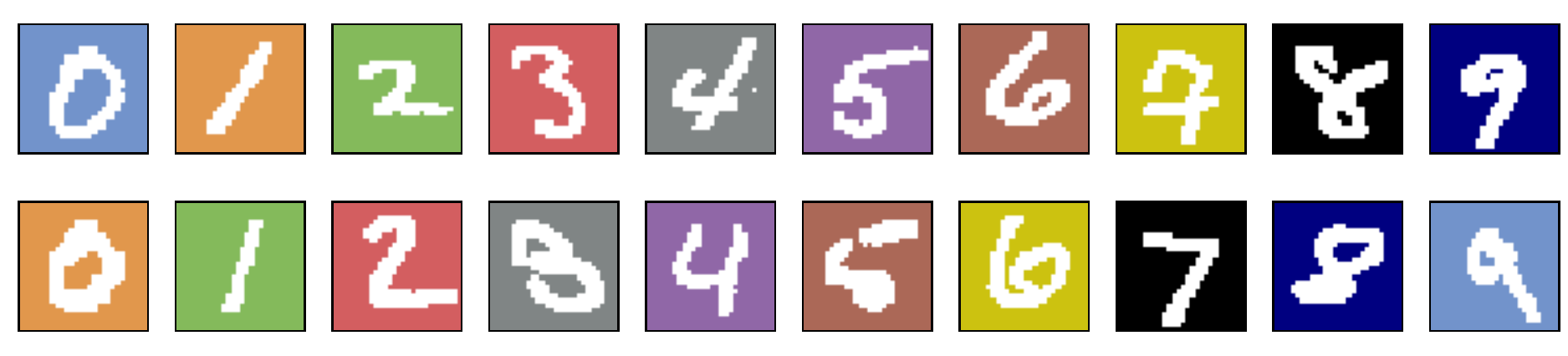}
\caption{Top row: Biased-MNIST training images. Each class is associated with a specific background color. Bottom row: testing images. The testing background color is selected at random. }
\label{fig:mnist}
\end{figure}

\end{document}